\def\pscircled#1{\textcircled{\resizebox{.5em}{!}{#1}}}
\newtheorem{assumption}{Assumption}
\newtheorem{definition}{Definition}
\newtheorem{theorem}{Theorem}
\newtheorem{lemma}[theorem]{Lemma}
\newtheorem{proposition}{Proposition}[definition]
\renewcommand{\baselinestretch}{1.0}
\title{\LARGE \bf
 Effective Dimension Aware  Fractional-Order Stochastic Gradient Descent for Convex Optimization Problems
}
\author{Mohammad Partohaghighi$^{1}$, Roummel Marcia$^{2}$, and YangQuan Chen$^{1,3}$
\thanks{*Corresponding author: YangQuan Chen.  \pscircled{\faPhone}~{\tt +1-209-2284672}.
YC is supported by the Center for Methane Emission Research and Innovation (\href{http://methane.ucmerced.edu}{\tt CMERI}) through the Climate Action Seed Funds grant (2023-2026) at University of California, Merced. \pscircled{\faDesktop}~\href{mechatronics.ucmerced.edu}{\tt mechatronics.ucmerced.edu} }
\thanks{$^{1}$Mohammad Partohaghighi and YangQuan Chen are with the Electrical Engineering and Computer Science graduate program, 
        University of California at Merced, USA 
          \pscircled{\faEnvelope}~{\tt\small mpartohaghighi@ucmerced.edu}}%
\thanks{$^{2}$Roummel Marcia is with the Department of Applied Mathematics, 
        University of California Merced, Merced, CA, USA 
          \pscircled{\faEnvelope}~{\tt\small rmarcia@ucmerced.edu}}%
\thanks{$^{3}$YangQuan Chen is with the Mechatronics, Embedded Systems and Automation (MESA) Lab,         Department of Mechanical Engineering, School of Engineering, 
        University of California, Merced, CA 95343, USA 
          \pscircled{\faEnvelope}~{\tt\small ychen53@ucmerced.edu}}%
}
\begin{document}

\maketitle


\begin{abstract}

Fractional-order stochastic gradient descent (FOSGD) leverages fractional exponents to capture long-memory effects in optimization. However, its utility is often limited by the difficulty of tuning and stabilizing these exponents. We propose 2SED Fractional-Order Stochastic Gradient Descent (2SEDFOSGD), which integrates the Two-Scale Effective Dimension (2SED) algorithm with FOSGD to adapt the fractional exponent in a data-driven manner. By tracking model sensitivity and effective dimensionality, 2SEDFOSGD dynamically modulates the exponent to mitigate oscillations and hasten convergence. Theoretically, this approach preserves the advantages of fractional memory without the sluggish or unstable behavior observed in na\"ive fractional SGD. Empirical evaluations in Gaussian and $\alpha$-stable noise scenarios using an autoregressive (AR) model\textcolor{red}{, as well as on the MNIST and CIFAR-100 datasets for image classification,} highlight faster convergence and more robust parameter estimates compared to baseline methods, underscoring the potential of dimension-aware fractional techniques for advanced modeling and estimation tasks.
\end{abstract}

\begin{keywords}
Fractional Calculus, Stochastic Gradient Descent, Two Scale Effective Dimension, More Optimal Optimization.
\end{keywords}

\renewcommand{\baselinestretch}{.94}

\section{Introduction}

Machine learning (ML) and scientific computing increasingly rely on sophisticated optimization methods to tackle complex, high-dimensional problems. Classical stochastic gradient descent (SGD) has become a mainstay in training neural networks and large-scale models, owing to its simplicity and practical performance. However, standard SGD exhibits notable limitations: it typically treats updates as short-term corrections, discarding a rich history of past gradients. In contrast, fractional approaches in optimization draw upon the theory of fractional calculus to capture long-memory effects, thereby influencing the trajectory of updates by retaining historical gradient information over extended intervals \cite{zhou2025fractional, sheng2020fractional}.

\textcolor{red}{We assume the parameter space \(\Theta \subseteq \mathbb{R}^d\) represents the parameters of a neural network with \( L \) layers, where each layer \(j\) has parameters \(\theta^{(j)} \in \mathbb{R}^{d_j}\), and \( d = \sum_{j=1}^L d_j \). The Fisher Information Matrix and 2SED measures are computed for each layer to adapt optimization updates.}

Fractional calculus extends traditional calculus to include non-integer orders, offering a powerful tool for modeling and control in various fields, including optimization. It allows for the incorporation of memory and hereditary properties into models, which is particularly beneficial in dynamic systems and optimization \cite{Sheng2012} and\textcolor{red}{\cite{Yu2022FractionalOrderMomentum}}. By leveraging fractional derivatives, optimization algorithms can potentially achieve better convergence properties and robustness against noise, as they account for the accumulated effect of past gradients rather than relying solely on the most recent updates \cite{chen2018fractional2, liu2021quasi}. This approach has shown promise in enhancing the performance of optimization algorithms in machine learning and other scientific computing applications \cite{chen2017study}.

 By embracing these generalized derivatives, FOSGD modifies the usual gradient step to incorporate a partial summation of past gradients, effectively smoothing updates over a historical window. The method stands especially valuable for scenarios where prior states wield significant impact on the current gradient, as often encountered in dynamic processes or highly non-convex landscape \cite{shin2023accelerating}.
Nonetheless, the quest for harnessing fractional updates is not without drawbacks. Incorporating fractional operators demands added hyperparameters (particularly the fractional exponent $\alpha$), which can prove sensitive or unstable to tune. Excessively low or high fractional orders may slow convergence or lead to oscillatory gradients, thus negating the presumed benefits. Bridging the gap between the theoretical elegance of fractional calculus and the pressing computational demands of real-world ML systems remains a formidable challenge.

Although FOSGD helps mitigate short-term memory loss by preserving traces of past gradients, selecting and calibrating the fractional exponents can introduce substantial complexities in real-world settings. For instance, deciding whether \(\alpha = 0.5\) or \(\alpha = 0.9\) is most appropriate for capturing relevant memory structures is neither straightforward nor reliably robust, with the optimal choice often varying considerably across tasks or even across different stages of training. In practice, if the chosen exponent fails to align with the true dynamics of the loss landscape, updates may drift or stall, resulting in unpredictable or sluggish convergence. Moreover, fractional terms can amplify variance in gradient estimatesespecially under noisy or non-stationary conditions thereby causing oscillatory or chaotic training behaviors that undermine stability.

Beyond these convergence and stability concerns, fractional exponents impose additional burdens on tuning and hyperparameter selection. Even minor changes in \(\alpha\) can radically alter the memory effect, forcing practitioners to engage in extensive trial-and-error experiments to achieve consistent results. Such overhead becomes especially prohibitive in large-scale or time-sensitive applications, where iterating over a range of fractional parameters is not feasible. Consequently, despite its theoretical promise as a memory-based learning strategy, FOSGD faces limited adoption in practice, as the algorithm's strong reliance on well-chosen exponents can undercut the potential advantages that long-range gradient retention might otherwise provide. 

\textcolor{red}{Studies have highlighted issues such as the need for precise tuning of fractional orders to avoid erratic convergence paths \cite{elnady2024comprehensive, harjule2024fractional}. Additionally, challenge of converging to a real extreme point encountered by the existing fractional gradient algorithms is addressed in \cite{chen2024lambda, herrera2022fractional,chen2024novel}. These challenges underscore the need for robust fractional SGD variants that balance computational efficiency with stable convergence.}
A geometry-aware strategy like Two-Scale Effective Dimension (2SED) can dynamically regulate the fractional exponent in FOSGD. By examining partial diagonal approximations of the Fisher information matrix \cite{datres2024two}, 2SED identifies regions of high sensitivity and adapts the exponent accordingly. This approach dampens updates in areas prone to instability while exploiting longer memory in flatter regions. Consequently, combining 2SED with FOSGD reduces erratic oscillations, preserves long-term memory benefits, and yields more robust performance across diverse data sets and problem types. We introduce a novel 2SED-driven FOSGD framework that dynamically regulates the fractional exponent using the dimension-aware metrics of 2SED. This adaptive mechanism aligns historical-gradient memory with the sensitivity of the optimization landscape, thereby enhancing stability and data alignment. Under standard smoothness and bounded-gradient assumptions, the method satisfies strong convergence criteria. In practice, geometry-based regularization fosters a more consistent convergence, as evidenced by solving an autoregressive (AR) model under  Gaussian and $\alpha$-stable noise. We organize this paper as follows. In Section~\ref{sec:2sed_frac_sgd}, we thoroughly examine the 2SED algorithm, illustrating how it approximates second-order geometry to produce dimension-aware updates. Section~\ref{sec:frac_sgd_highlvl} reviews FOSGD, highlighting its appeal for long-memory processes and the hyperparameter dilemmas that hinder practicality. Also, we detail how to embed 2SED’s dimension metrics into the fractional framework, providing both equations and pseudo-code. Section~\ref{sec:convergence_convex} delves into a convergence analysis, establishing theoretical performance bounds for our method. Section~\ref{sec:experiments} showcases experiments \textcolor{red}{across different tasks, such as an auto-regressive (AR) model and image classification, demonstrating that} 2SED-driven exponent adaptation yields measurably stronger results. Finally, Section~\ref{sec:conclusion} concludes by summarizing key findings, identifying broader implications for optimization, and suggesting directions for further research in advanced fractional calculus and dimension-based learning techniques. This overarching narrative underscores the growing intersection between fractional approaches and dimension-aware strategies. Aligning memory-based methods with geometry-aware design, we move closer to an optimization paradigm that capitalizes on historical information without succumbing to the pitfalls of unbounded memory effects. Our findings thus underscore the promise of \emph{2SED + FOSGD} as a more stable algorithmic solution poised for wide adoption in deep learning.

\section{Two-Scale Effective Dimension (2SED) and Fractional-Order SGD}
\label{sec:2sed_frac_sgd}

Classical complexity measures, such as the Vapnik-Chervonenkis (VC) dimension \cite{vapnik1999nature} or raw parameter counts, often overestimate the capacity of overparameterized neural networks. Zhang et al. \cite{zhang2017understanding} demonstrate that deep networks, such as Inception-style models with millions of parameters, generalize well despite their ability to memorize random labels, undermining naive VC-based bounds. This discrepancy arises because many directions in the high-dimensional parameter space \(\Theta \subseteq \mathbb{R}^d\) are ``flat,'' contributing minimally to model outputs, while a subset of sensitive directions dominates learning \cite{karakida2019universal}.

Curvature-aware approaches, leveraging the Fisher Information Matrix (FIM) \cite{amari1998natural}, better capture local sensitivity. We adopt the Two-Scale Effective Dimension (2SED) \cite{datres2024two}, which integrates global parameter counts with local curvature effects encoded in the FIM, offering a more nuanced complexity measure than Hessian-based metrics \cite{liang2019fisher} or K-FAC approximations \cite{martens2015optimizing}. In this section, we define 2SED and its layer-wise variant, Lower 2SED, and propose their use in adapting fractional-order stochastic gradient descent (FOSGD) to improve optimization stability and generalization.

\subsection{Foundational Definitions}
We consider a neural network with \( L \) layers, where layer \( j \) has parameters \(\theta_j \in \mathbb{R}^{d_j}\), and the parameter space is \(\Theta = \Theta_1 \times \cdots \times \Theta_L \subseteq \mathbb{R}^d\), with \( d = \sum_{j=1}^L d_j \). The Fisher Information Matrix (FIM) and 2SED are computed layer-wise to adapt optimization updates, leveraging the Markovian structure of feed-forward networks \cite{datres2024two}. Each layer’s parameter vector \(\theta_j \in \Theta_j \subseteq \mathbb{R}^{d_j}\) represents the trainable weights and biases, while \(\Theta_j\) is the bounded domain of possible parameter values, ensuring regularity in the statistical model.

\begin{definition}[Fisher Information \cite{datres2024two}]
\label{def:fisher-info}
For a statistical model \( p_{\theta}(x, y) \) with parameters \( \theta \in \Theta \subseteq \mathbb{R}^d \), assuming \( p_{\theta} \) is differentiable and non-degenerate, define the log-likelihood as
\[
\ell_{\theta}(x, y) = \log p_{\theta}(x, y).
\]
The \emph{Fisher Information Matrix} \( F(\theta) \) is given by
\begin{equation}
\label{eq:FisherInfo}
F(\theta) = \mathbb{E}_{(x, y) \sim p_{\theta}} \left[ \left( \nabla_{\theta} \ell_{\theta}(x, y) \right) \otimes \left( \nabla_{\theta} \ell_{\theta}(x, y) \right) \right],
\end{equation}
where \( \otimes \) denotes the outer product and the expectation is over \( p_{\theta} \). Under regularity conditions, this equals \( \mathbb{E}[-\nabla_{\theta}^2 \ell_{\theta}(x, y)] \) \cite{amari1998natural}.
\end{definition}

\begin{definition}[Empirical Fisher ]\cite{datres2024two}
\label{def:empirical-fisher}
Given an i.i.d.\ sample \( \{(X_i, Y_i)\}_{i=1}^N \), the \emph{empirical Fisher Information Matrix} is
\begin{equation}
\label{eq:EmpiricalFisher}
F_N(\theta) = \frac{1}{N} \sum_{i=1}^N \left( \nabla_{\theta} \ell_{\theta}(X_i, Y_i) \right) \otimes \left( \nabla_{\theta} \ell_{\theta}(X_i, Y_i) \right),
\end{equation}
converging to \( F(\theta) \) as \( N \to \infty \).
\end{definition}

\begin{definition}[Normalized Fisher Matrix \cite{datres2024two}]
\label{def:normalized-Fisher}
The \emph{normalized Fisher matrix} \( \widehat{F}(\theta) \) rescales \( F(\theta) \) so that
\[
\mathbb{E}_{\theta}[\mathrm{Tr} \widehat{F}(\theta)] = d,
\]
where \( d = \dim(\Theta) \). Formally,
\begin{equation}
\label{eq:NormalizedF}
\widehat{F}(\theta) =
\begin{cases}
\frac{d}{\mathbb{E}_{\theta} [\mathrm{Tr} F(\theta)]} F(\theta), & \text{if } \mathbb{E}_{\theta}[\mathrm{Tr} F(\theta)] > 0, \\
0, & \text{otherwise}.
\end{cases}
\end{equation}
\end{definition}

\subsection{\textcolor{red}{The 2SED Approach}}
\label{subsec:2sed}
\textcolor{red}{Although \( d = \dim(\Theta) \) represents the nominal number of parameters, many directions in \(\Theta\) are flat, contributing minimally to the loss \cite{karakida2019universal}. The Two-Scale Effective Dimension (2SED) integrates a curvature-based term derived from the Fisher Information Matrix with the parameter count \( d \), capturing the effective dimensionality of active directions.}
\begin{definition}[Two-Scale Effective Dimension \cite{datres2024two}]
\label{def:2sed}
Let \( \widehat{F}(\theta) \) be the normalized Fisher matrix, positive semi-definite under mild conditions. For \( 0 < \varepsilon < 1 \) and \( \zeta \in \left[ \frac{2}{3}, 1 \right) \), the 2SED is:
\begin{equation}
\label{eq:2SED}
d_{\zeta}(\varepsilon) = \zeta d + (1 - \zeta) d_{\text{curv}}(\varepsilon),
\end{equation}
where
\begin{equation}
\label{eq:dcurv}
d_{\text{curv}}(\varepsilon) = \frac{\log \mathbb{E}_{\theta} \left[ \det \left( I_d + \varepsilon^{\zeta-1} \widehat{F}(\theta)^{\frac{1}{2}} \right) \right]}{\bigl|\log \left( \varepsilon^{\zeta-1} \right)\bigr|}.
\end{equation}
Here, \( I_d \) is the \( d \times d \) identity matrix, and \( \widehat{F}(\theta)^{\frac{1}{2}} \) is the positive semi-definite square root of \( \widehat{F}(\theta) \).
\end{definition}
\textcolor{red}{For layer-wise optimization, we compute \( d_{\zeta}^{(j)}(\varepsilon) \) for each layer \( j \), using the layer-wise FIM \( F_j(\theta_j) \), approximated empirically as in Definition \ref{def:empirical-fisher}. The parameter \(\zeta\) balances the nominal dimension \( d_j \) and the curvature term \( d_{\text{curv}}(\varepsilon) \). Smaller \(\varepsilon\) amplifies the contribution of significant eigenvalues, emphasizing high-curvature directions. As \(\zeta \to 0\), 2SED prioritizes curvature-based modes, while \(\zeta \to 1\) recovers the nominal dimension \( d_j \). The term \(\log \det \left( I_d + \varepsilon^{\zeta-1} \widehat{F}(\theta)^{\frac{1}{2}} \right)\) summarizes the spectrum of \( \widehat{F}(\theta)^{\frac{1}{2}} \), emphasizing directions with large eigenvalues (high curvature) while suppressing flat directions. Rewriting the determinant as \( \prod_i \left( 1 + \varepsilon^{\zeta-1} \lambda_i^{1/2} \right) \), we get:}
\begin{equation}
\label{eq:dcurv_diag}
d_{\text{curv}}(\varepsilon) = \frac{\sum_{i=1}^d \log \left( 1 + \varepsilon^{\zeta-1} \lambda_i^{1/2} \right)}{\bigl|\log \left( \varepsilon^{\zeta-1} \right)\bigr|},
\end{equation}
where \( \lambda_i \) are the eigenvalues of \( \widehat{F}(\theta) \). This aligns with information geometry, capturing the effective degrees of freedom in parameter space \cite{amari1998natural}.

\subsubsection{\textcolor{red}{Lower 2SED for Layer-wise Complexity}}
\label{subsec:lower_2sed}
\textcolor{red}{The Lower 2SED is a critical component of our 2SEDFOSGD algorithm, enabling efficient and adaptive optimization in deep neural networks. Unlike the global 2SED, which requires computing the Fisher Information Matrix (FIM) for all model parameters—a computationally prohibitive task for deep architectures like ResNet-50 with millions of parameters Lower 2SED leverages the Markovian structure of feed-forward networks to compute layer-wise complexity measures. This reduces memory requirements from \(O(d^2)\) to \(O(d_j^2)\) per layer, where \(d_j\) is the number of parameters in layer \(j\), and enables scalable computation. By providing a per-layer complexity metric, \(\underline{d}_\zeta^j(\varepsilon)\), Lower 2SED allows 2SEDFOSGD to dynamically adjust the fractional-order exponent \(\alpha_t^{(j)}\) for each layer, tailoring updates to the local curvature of the loss landscape. This leads to faster convergence and improved generalization compared to standard FOSGD, which uses a uniform fractional order \cite{Yang2023}. The Lower 2SED, introduced by Datres et al. \cite{datres2024two} for Markovian models like CNNs, is defined iteratively for each layer \( j = 1, \ldots, L \) of a model with parameters \( \theta = (\theta_1, \ldots, \theta_L) \), where \(\theta_j \in \Theta_j\). The FIM for layer \( j \) is:}
\begin{equation}
\begin{split}
F_j(\theta_1, \ldots, \theta_j) &= \mathbb{E}_{x_0, \ldots, x_{j-1}} \left[ \int_{\mathcal{X}_j} \left( \nabla_{\theta_j} \log p_{\theta_j}(x_j \mid x_{j-1}) \right) \right. \\
&\quad \left. \times \left( \nabla_{\theta_j} \log p_{\theta_j}(x_j \mid x_{j-1}) \right)^T p_{\theta_j}(dx_j \mid x_{j-1}) \right],
\end{split}
\end{equation}
\textcolor{red}{where \( p_{\theta_j}(x_j \mid x_{j-1}) \) is the conditional output distribution of layer \( j \). To model deterministic CNN outputs probabilistically, we assume layer outputs follow a Gaussian distribution with mean equal to the deterministic output and variance \(\sigma^2 = 0.01\), as in \cite{datres2024two}. The Lower 2SED is computed as:}
\begin{equation}
\begin{split}
\underline{d}_\zeta^j(\varepsilon) &= \underline{d}_\zeta^{j-1}(\varepsilon) \\
&+ \frac{1-\zeta}{|\log \varepsilon|} \oint_{\hat{\Theta}_j} \int_{\Theta_j} \log \det \left( I_j + \varepsilon^{\zeta-1} F_j(\theta_1, \ldots, \theta_j)^{\frac{1}{2}} \right) \\
&\quad \times d\theta_j d\Phi_j,
\end{split}
\end{equation}
\textcolor{red}{where \(\hat{\Theta}_j = \Theta_1 \times \cdots \times \Theta_{j-1}\), \( d\Phi_j \) is a normalized measure over previous layers’ parameters, and \( \underline{d}_\zeta^1(\varepsilon) \) is computed for the first layer (see \cite{datres2024two} for details). In practice, \( F_j \) is approximated empirically using Monte Carlo integration. The Lower 2SED, \(\underline{d}_\zeta^j(\varepsilon)\), replaces 2SED in our 2SEDFOSGD algorithm (Algorithm \ref{alg:2sed_frac_sgd}), scaling fractional-order gradients layer-wise to enhance convergence and generalization. The selection of the parameter \(\zeta\) in the two-scale effective dimension (2SED) is pivotal for balancing theoretical rigor and practical applicability in deep learning model complexity analysis. As specified in Theorem 5.1 in \cite{datres2024two}, \(\zeta \in \left[\frac{2}{3}, 1\right)\) ensures the validity of the generalization bound.}

\section{Fractional-Order SGD and 2SED Adaptation}
\label{sec:frac_sgd_highlvl}
Classical stochastic gradient descent (SGD) updates parameters using instantaneous gradients. However, optimization in deep learning often exhibits memory effects, suggesting that incorporating past gradients could improve convergence. Fractional calculus, via the Caputo derivative, provides a principled way to encode gradient history, with the fractional order \(\alpha\) controlling the memory effect \cite{monje2010fractional}.

\subsection{Caputo Fractional Derivative and Fractional Updates}
\begin{definition}[Caputo Derivative \cite{monje2010fractional}]
\label{def:Caputo}
For \( n-1 < \alpha < n \), the \emph{Caputo fractional derivative} of a function \( f \) is:
\begin{equation}
D_t^{\alpha} f(t) = \frac{1}{\Gamma(n - \alpha)} \int_0^t \frac{f^{(n)}(\tau)}{(t - \tau)^{\alpha - n + 1}} \, d\tau.
\end{equation}
The Caputo form is preferred in optimization as it handles initial conditions naturally and yields zero for constant functions \cite{Yang2023}.
\end{definition}
In discrete optimization, the classical gradient \(\nabla f(\theta_t)\) is replaced by the fractional gradient \( D_t^\alpha f(\theta_t) \). The fractional-order SGD update is \cite{Yang2023} $\theta_{t+1} = \theta_t - \eta D_t^{\alpha} f(\theta_t).$ For \(\alpha \in (0,1)\), \(\delta > 0\), and using a Taylor series approximation \cite{Yang2023}:
\begin{equation}
\theta_{t+2} = \theta_{t+1} - \mu_t \frac{\nabla f(\theta_{t+1})}{\Gamma(2 - \alpha)} \left( \lvert \theta_{t+1} - \theta_t \rvert + \delta \right)^{1 - \alpha}.
\end{equation}\label{eq:frac_gd_update_expanded}
The offset \(\delta > 0\) prevents stalling when consecutive iterates are similar.

\subsection{Adapting the Fractional Exponent via Lower 2SED}
\label{sec:modified_frac_sgd_2sed}
\textcolor{red}{
To adapt the fractional exponent for each layer $j$ of the neural network, where layer $j$ has parameters $\theta^{(j)} \in \mathbb{R}^{d_j}$, we compute the 2SED $ d_{\zeta}^{(j)}(\varepsilon)|_t $ for layer $j$ at iteration $ t $. A fixed fractional exponent $\alpha$ can lead to instability if the model's curvature changes dramatically during training. Intuitively, high curvature or high 2SED indicates directions of rapid change or ``sensitivity,'' so a smaller $\alpha_t^{(j)}$ (closer to 0) is preferred, as it increases the memory effect by amplifying the fractional term $\left( |\theta_{t+1}^{(j)} - \theta_t^{(j)}| + \delta \right)^{1-\alpha_t^{(j)}}$ in the update rule. This smooths updates, enhancing stability and preventing overshooting in these sensitive directions. Conversely, in regions with low curvature or low 2SED, which correspond to flatter areas of the optimization landscape, a larger $\alpha_t^{(j)}$ (closer to $\alpha_0$) is preferred, as it reduces the influence of the fractional term, making the update resemble standard SGD. This allows for faster convergence by relying more on the current gradient in regions where large steps are safer. Hence, we propose a 2SED-based FOSGD that dynamically adjusts $\alpha$ using 2SED of each layer. Suppose we compute the 2SED, $d_{\zeta}^{(j)}(\varepsilon)$, for layer $j$ and let $\alpha_t^{(j)} \;=\; \alpha_{0} \;-\; \beta \times \frac{ d_{\zeta}^{(j)}(\varepsilon)\big\rvert_{t} }{ d_{\max} },$ where $\alpha_0$ is a base fractional order, $\beta>0$ is a tuning parameter and $d_{\max}$ is the maximum observed 2SED among all layers. The fraction $\frac{d_{\zeta}^{(j)}(\varepsilon)\big\rvert_{t}}{d_{\max}}$ scales the current 2SED to the range $[0,1]$.
}
\subsection{2SEDFOSGD Algorithm}
\begin{algorithm}[H]
\caption{\textcolor{red}{2SED-Based Fractional-Order SGD (2SEDFOSGD)}}
\label{alg:2sed_frac_sgd}
\textbf{Input:} Neural network with \( L \) layers; parameters \(\theta^0 \in \mathbb{R}^d\); loss function \( f(\theta) \); base fractional order \(\alpha_0 \in (0,1]\); tuning parameter \(\beta > 0\); singularity offset \(\delta > 0\); base learning rate \(\mu_0 > 0\); 2SED balance parameter \(\zeta\); curvature sensitivity \(\varepsilon\); maximum iterations \( t_{\max} \in \mathbb{N}\).\\
\textbf{Initialize:} \(\theta^1 \gets \theta^0 - \mu_0 \nabla f(\theta^0)\) (classical SGD step).

\begin{algorithmic}[1]
\For{$t = 1,2,\dots,t_{\max}-1$}
  \State Compute gradient: $g(\theta^t) \gets \nabla f(\theta^t)$ 
  \State Compute Fisher matrices: $F_j(\theta^t)$ for $j = 1,\dots,L$ 
  \For{$j = 1,\dots,L$}
    \State Compute $\underline{d}_{\zeta}^{(j)}(\varepsilon)\big\rvert_t$ 
    \State Compute  $d_{\max} \gets \max_{j,k} \underline{d}_{\zeta}^{(j)}(\varepsilon)\big\rvert_k$
    \State Compute $\alpha_t^{(j)} \gets \alpha_0 - \beta \times \frac{\underline{d}_{\zeta}^{(j)}(\varepsilon)\big\rvert_t}{d_{\max}}$
  \EndFor
  \State Update learning rate: $\mu_t \gets \frac{\mu_0}{\sqrt{t}}$ 
  \For{$j = 1,\dots,L$}
    \State Update parameters:
      $\theta_{t+1}^{(j)} \gets \theta_t^{(j)} - \frac{\mu_t}{\Gamma(2 - \alpha_t^{(j)})} \times \left( \left\lvert \theta_t^{(j)} - \theta_{t-1}^{(j)} \right\rvert + \delta \right)^{1 - \alpha_t^{(j)}} g_j(\theta^t)$ 
  \EndFor
\EndFor
\State \textbf{Output:} $\theta^{t_{\max}} \in \mathbb{R}^d$ \Comment{Final optimized parameters}
\end{algorithmic}
\end{algorithm}

{
\section{\textcolor{red}{Convergence Analysis for Convex Objectives}}
\label{sec:convergence_convex}
This section provides a detailed convergence proof for the 2SEDFOSGD algorithm under convex objectives, where the fractional order \(\alpha_j \in (0,1]\) for each layer \(j\) is dynamically adjusted based on the Two-Scale Effective Dimension (2SED). The 2SED quantifies the effective number of parameters by combining the nominal parameter count with curvature information derived from the Fisher Information Matrix. We prove convergence in terms of the expected function value gap, ensuring \(\min_{1 \leq s \leq T} \mathbb{E}[f(\theta^s) - f(\theta^\star)] = \mathcal{O}(1/\sqrt{T})\). The analysis contains an explicit fractional factor bounds, precise descent lemma constants, and corrected step-size summations.

\subsection{Foundational Definitions and Assumptions}

\begin{assumption}[Convex Objective]
Let \( f(\theta): \mathbb{R}^d \to \mathbb{R} \) be convex, with \( \theta = (\theta^1, \dots, \theta^L) \), \( \theta^j \in \mathbb{R}^{d_j} \), and \( \sum_j d_j = d \). For \(\forall \lambda \in [0,1], \theta, \theta' \in \mathbb{R}^d\), convexity implies:
\begin{equation}
f(\lambda \theta + (1-\lambda) \theta') \leq \lambda f(\theta) + (1-\lambda) f(\theta').
\end{equation}
We assume \( f \) is differentiable, ensuring \( \nabla f(\theta) \) exists everywhere, and let \(\theta^\star = \operatorname{argmin}_\theta f(\theta)\).
\end{assumption}
\begin{assumption}[Smoothness and Lipschitz Continuity]
The function \( f \) is \( L \)-smooth, meaning for any \(\theta, \theta' \in \mathbb{R}^d\):
\[
\|\nabla f(\theta) - \nabla f(\theta')\| \leq L \|\theta - \theta'\|.
\]
The gradients are bounded, i.e., \(\|\nabla f(\theta)\| \leq G\) for all \(\theta \in \mathbb{R}^d\), where \( G > 0 \).
\end{assumption}
\begin{assumption}[Bounded Iterates]
We assume the iterates are bounded, with \(\|\theta^j_t - \theta^j_{t-1}\| \leq R_\Delta\) for some \( R_\Delta > 0 \), ensured by the step-size schedule and gradient bounds (Proposition~\ref{prop:bounded_iterates}).
\end{assumption}
\begin{assumption}[Fractional Derivative Parameters]
The base fractional order is defined as \(\alpha_0 \in (0,1]\), serving as the starting point for the adaptive fractional order for each layer \( j \). Specifically, the fractional order for layer \( j \) is given by \(\alpha_j = \alpha_0 - \beta \frac{d^j_\zeta(\varepsilon)}{d_{\max}}\), where \( d^j_\zeta(\varepsilon) \) is the Two-Scale Effective Dimension (2SED) for layer \( j \), and \( d_{\max} = \max_{k,t} d^k_\zeta(\varepsilon)|_t \) represents the maximum 2SED across all layers \( k \) and iterations \( t \). The parameter \(\beta > 0\) is chosen to ensure that \(\alpha_j \in (0,1]\), as established by Lemma~\ref{lemma:2sed_bound}, thereby maintaining the validity of the fractional order within the required range.
\end{assumption}
\begin{assumption}[Fractional Factor Boundedness]
The update for layer \( j \) is $\theta^j_{t+1} = \theta^j_t -\frac{\mu_t}{\Gamma(2-\alpha_j)} (\|\theta^j_t - \theta^j_{t-1}\| + \delta)^{1-\alpha_j} g^j(\theta^t), $ where \(\mu_t = \frac{\mu_0}{\sqrt{t}}\), \(\mu_0 > 0\), \(\delta > 0\) is a small constant to prevent singularities, and \( g^j(\theta^t) \) is the stochastic gradient for layer \( j \). \\
To bound the effective step size \(\eta_t^j\), we analyze the fractional factor in the update rule. Since the fractional order \(\alpha_j \in (0,1]\), we have \(2 - \alpha_j \in [1,2]\), and the gamma function \(\Gamma(x)\), being positive and continuous, satisfies \(1 \leq \Gamma(2 - \alpha_j) \leq 1.6\). We define \(c_\Gamma = \Gamma(2) = 1\) and \(C_\Gamma = \Gamma(1) = 1\) as the lower and upper bounds, respectively, noting that \(\Gamma(2 - \alpha_j)\) is typically close to 1 but may reach up to 1.6 for small \(\alpha_j\). Additionally, the term \((\|\theta^j_t - \theta^j_{t-1}\| + \delta)^{1-\alpha_j}\) is bounded given \(\|\theta^j_t - \theta^j_{t-1}\| \leq R_\Delta\). With \(\alpha_{j,\max} = \max_{j,t} \alpha_j\) and \(\alpha_{j,\min} = \min_{j,t} \alpha_j\), we set \(\textcolor{red}{c_\Delta = \delta^{1-\alpha_{j,\min}}}\) and \(\textcolor{red}{C_\Delta = (\delta + R_\Delta)^{1-\alpha_{j,\max}}}\), ensuring \(0 < c_\Delta \leq (\|\theta^j_t - \theta^j_{t-1}\| + \delta)^{1-\alpha_j} \leq C_\Delta < \infty\). Consequently, the effective step size satisfies \(\eta_t^j \in \left[ \mu_t \frac{c_\Delta}{C_\Gamma}, \mu_t \frac{C_\Delta}{c_\Gamma} \right]\).
\end{assumption}
\begin{assumption}[Stochastic Gradient Bounds]
For the stochastic gradients used in the optimization, we assume that the stochastic gradient \( g^j(\theta^t) \) for layer \( j \) at iteration \( t \) is an unbiased estimate of the true gradient, satisfying \(\mathbb{E}[g^j(\theta^t)] = \nabla^j f(\theta^t)\). Additionally, the variance of the stochastic gradient is bounded, with \(\mathbb{E}[\|g^j(\theta^t) - \nabla^j f(\theta^t)\|^2] \leq \sigma^2\), where \(\sigma^2 \geq 0\) is a positive constant. Furthermore, the norm of the stochastic gradient is bounded such that \(\|g^j(\theta^t)\| \leq G + \sigma\), where \( G > 0 \) represents the bound on the true gradient norm \(\|\nabla f(\theta)\|\).
\end{assumption}
\begin{assumption}[Step-Size Schedule]
The step-size schedule is defined as \(\mu_t = \frac{\mu_0}{\sqrt{t}}\), where \(\mu_0 > 0\) is a positive constant, and this schedule satisfies specific bounds on its sums. Specifically, the sum of the step sizes over \( T \) iterations is bounded by \(\sum_{t=1}^T \mu_t \leq \mu_0 (2 \sqrt{T} - 1)\), ensuring controlled growth proportional to \(\sqrt{T}\). Additionally, the sum of the squared step sizes is bounded by \(\sum_{t=1}^T \mu_t^2 \leq \mu_0^2 (1 + \ln T)\), reflecting a logarithmic growth that maintains stability in the optimization process.
\end{assumption}
\subsection{Propositions and Lemmas}

\begin{proposition}[Bounded Iterates]
\label{prop:bounded_iterates}
For \(\mu_t = \frac{\mu_0}{\sqrt{t}}\) (and \(\mu_0\) for \(t=0\)), \(\|g^j(\theta^t)\| \leq G + \sigma\), the iterates satisfy:
\[
\|\theta^j_t - \theta^j_{t-1}\| \leq R_\Delta = \mu_0 \frac{C_\Delta}{c_\Gamma} (G + \sigma).
\]
\begin{proof}
For \( t = 1 \), \(\theta^j_1 = \theta^j_0 - \mu_0 g^j(\theta^0)\), so:
\[
\|\theta^j_1 - \theta^j_0\| \leq \mu_0 (G + \sigma).
\]
For \( t \geq 2 \):
\begin{equation*}
\begin{split}
\|\theta^j_t - \theta^j_{t-1}\| &= \eta_{t-1}^j \|g^j(\theta^{t-1})\| \\
&\leq \mu_{t-1} \frac{C_\Delta}{c_\Gamma} (G + \sigma) \\
&\leq \mu_0 \frac{C_\Delta}{c_\Gamma} (G + \sigma),
\end{split}
\end{equation*}
since \(\mu_{t-1} \leq \mu_0\). Thus, \( R_\Delta = \mu_0 \frac{C_\Delta}{c_\Gamma} (G + \sigma) \).
\end{proof}
\end{proposition}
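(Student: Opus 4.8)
The plan is to prove the displacement bound by induction on $t$, using the two-sided bound on the effective step size $\eta_t^j$ supplied by the Fractional Factor Boundedness assumption. The key observation is that each one-step displacement equals $\|\theta^j_t - \theta^j_{t-1}\| = \eta_{t-1}^j \, \|g^j(\theta^{t-1})\|$, so once I control $\eta_{t-1}^j$ and the stochastic gradient norm, the bound is immediate. Since the Stochastic Gradient Bounds assumption gives $\|g^j(\theta^t)\| \leq G + \sigma$, the whole argument reduces to controlling the fractional factor $(\|\theta^j_{t-1} - \theta^j_{t-2}\| + \delta)^{1-\alpha_j}$ together with the gamma-function factor $\Gamma(2 - \alpha_j)^{-1}$.

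First I would dispatch the base case $t = 1$, where the initialization is a plain SGD step $\theta^j_1 = \theta^j_0 - \mu_0 g^j(\theta^0)$, so $\|\theta^j_1 - \theta^j_0\| \leq \mu_0 (G + \sigma)$ directly. For the inductive step $t \geq 2$, I would write the update as $\theta^j_t = \theta^j_{t-1} - \eta^j_{t-1} g^j(\theta^{t-1})$ with $\eta^j_{t-1} = \frac{\mu_{t-1}}{\Gamma(2-\alpha_j)}(\|\theta^j_{t-1} - \theta^j_{t-2}\| + \delta)^{1-\alpha_j}$. Invoking the inductive hypothesis $\|\theta^j_{t-1} - \theta^j_{t-2}\| \leq R_\Delta$, the fractional factor is bounded above by the constant $C_\Delta = (\delta + R_\Delta)^{1-\alpha_{j,\max}}$ posited in the Fractional Factor Boundedness assumption; combining this with $\Gamma(2-\alpha_j) \geq c_\Gamma$ and the monotone decay $\mu_{t-1} \leq \mu_0$ yields $\|\theta^j_t - \theta^j_{t-1}\| \leq \mu_0 \frac{C_\Delta}{c_\Gamma}(G+\sigma) = R_\Delta$, closing the induction.

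The subtle point, which I expect to be the main obstacle to a fully rigorous argument, is that $C_\Delta$ is itself defined through $R_\Delta$ via $C_\Delta = (\delta + R_\Delta)^{1-\alpha_{j,\max}}$, while $R_\Delta = \mu_0 \frac{C_\Delta}{c_\Gamma}(G+\sigma)$. This makes the stated bound self-referential, so the induction does not merely verify an inequality but presupposes that a consistent value of $R_\Delta$ exists. To close this gap I would recast the definition as the fixed-point equation $R_\Delta = \frac{\mu_0 (G+\sigma)}{c_\Gamma}(\delta + R_\Delta)^{1-\alpha_{j,\max}}$ and argue existence of a positive solution: since the exponent $1 - \alpha_{j,\max} \in [0,1)$, the right-hand side is a continuous, concave, strictly sublinear function of $R_\Delta$ that is strictly positive at $R_\Delta = 0$, so it must cross the identity line, furnishing a finite admissible $R_\Delta$. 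I would also check consistency of the base case with this value, i.e.\ that $\mu_0(G+\sigma) \leq R_\Delta$, which holds whenever $C_\Delta / c_\Gamma \geq 1$; otherwise one simply replaces $R_\Delta$ by $\max\{\mu_0(G+\sigma), R_\Delta\}$ so that both the $t=1$ and $t \geq 2$ displacements are simultaneously dominated.
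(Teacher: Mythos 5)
Your core computation is the same one the paper uses: write the one-step displacement as $\|\theta^j_t-\theta^j_{t-1}\|=\eta^j_{t-1}\|g^j(\theta^{t-1})\|$, bound the gradient by $G+\sigma$, bound $\eta^j_{t-1}$ by $\mu_{t-1}C_\Delta/c_\Gamma$, and use $\mu_{t-1}\le\mu_0$. Where you genuinely diverge --- and improve on the paper --- is in recognizing that this chain of inequalities is not self-contained. The paper's proof invokes $C_\Delta=(\delta+R_\Delta)^{1-\alpha_{j,\max}}$ from the Fractional Factor Boundedness assumption, but that constant is defined in terms of the very quantity $R_\Delta$ that the proposition is supposed to establish; the paper simply asserts the bound without structuring the argument as an induction and without addressing the circularity. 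You correctly (i) recast the step $t\ge 2$ as an inductive step whose hypothesis $\|\theta^j_{t-1}-\theta^j_{t-2}\|\le R_\Delta$ is what licenses the bound $(\|\theta^j_{t-1}-\theta^j_{t-2}\|+\delta)^{1-\alpha_j}\le C_\Delta$, and (ii) observe that a consistent $R_\Delta$ must be exhibited as a solution of the fixed-point equation $R=\frac{\mu_0(G+\sigma)}{c_\Gamma}(\delta+R)^{1-\alpha_{j,\max}}$, which exists because the right-hand side is continuous, concave, positive at $R=0$, and sublinear since $1-\alpha_{j,\max}\in[0,1)$. Your further check that the base case $\mu_0(G+\sigma)\le R_\Delta$ need not hold automatically, with the repair $R_\Delta\mapsto\max\{\mu_0(G+\sigma),R_\Delta\}$, is also a real point the paper skips. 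In short: same skeleton, but your version closes a logical gap that the paper's two-line argument leaves open; the only caveat is that you inherit the paper's convention that $(\delta+R_\Delta)^{1-\alpha_{j,\max}}$ is the correct uniform upper bound over all $\alpha_j$, which is valid only when $\delta+R_\Delta\le 1$ (otherwise the exponent should be $1-\alpha_{j,\min}$), a defect of the assumption itself rather than of your proof.
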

\begin{lemma}[\textcolor{red}{Bounding the 2SED Measure}]
\label{lemma:2sed_bound}
Let \( d^j_\zeta(\varepsilon) \) be the 2SED for layer \( j \), updated via exponential moving averages of Fisher blocks. Assume the gradients satisfy \( \mathbb{E}[\|g^j(\theta^t)\|^2] \leq G^2 + \sigma^2 \), where \( G^2 \) and \( \sigma^2 \) are positive constants. There exists a finite constant \( d_{\text{max,finite}} > 0 \) such that:
\[
d^j_\zeta(\varepsilon) \leq d_{\text{max,finite}}, \quad \forall t, j.
\]
\end{lemma}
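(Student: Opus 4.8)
The plan is to exploit the closed-form eigenvalue expression for the curvature term in \eqref{eq:dcurv_diag} and thereby reduce the boundedness of \(d^j_\zeta(\varepsilon)\) to a uniform bound on the spectrum of the exponential-moving-average Fisher block for layer \(j\). First I would split the 2SED into its two constituents, \(d^j_\zeta(\varepsilon) = \zeta d_j + (1-\zeta)\, d^j_{\text{curv}}(\varepsilon)\), and observe that the nominal term is trivially controlled by \(\zeta d_j \le d_j < \infty\), since \(d_j\) is the fixed parameter count of layer \(j\). The entire difficulty therefore concentrates in the curvature term, and the role of the gradient second-moment hypothesis is precisely to tame it.

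For the curvature term I would work with the diagonal form \(d^j_{\text{curv}}(\varepsilon) = \frac{\sum_{i=1}^{d_j} \log(1 + \varepsilon^{\zeta-1}\lambda_i^{1/2})}{|\log \varepsilon^{\zeta-1}|}\), where \(\lambda_i \ge 0\) are the eigenvalues of the normalized Fisher block \(\widehat F_j\); positive semi-definiteness guarantees \(\lambda_i \ge 0\), so each logarithm is well defined and nonnegative. The key step is a uniform upper bound on the eigenvalues. I would use that each eigenvalue is dominated by the trace, \(\lambda_i \le \mathrm{Tr}\,\widehat F_j\), and that the trace of a Fisher block equals the second moment of the log-likelihood gradient. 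Because the EMA update for the block is a convex combination of rank-one outer products \(g^j (g^j)^\top\), each of whose trace is \(\|g^j\|^2\), the stochastic-gradient bound \(\mathbb{E}[\|g^j(\theta^t)\|^2] \le G^2 + \sigma^2\) (together with the pointwise bound \(\|g^j\| \le G + \sigma\) from the Stochastic Gradient Bounds assumption) shows the trace, and hence every eigenvalue, stays uniformly bounded by a constant \(B := (G+\sigma)^2\) for all \(t, j\). Monotonicity of the logarithm then gives \(\log(1 + \varepsilon^{\zeta-1}\lambda_i^{1/2}) \le \log(1 + \varepsilon^{\zeta-1} \sqrt{B})\); summing the \(d_j\) such bounds and dividing by \(|\log \varepsilon^{\zeta-1}| = (1-\zeta)|\log\varepsilon| > 0\) yields a finite bound on \(d^j_{\text{curv}}(\varepsilon)\).

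Combining the two pieces, I would arrive at
\[
d^j_\zeta(\varepsilon) \;\le\; \zeta d_j + \frac{d_j\,\log\!\bigl(1 + \varepsilon^{\zeta-1} \sqrt{B}\bigr)}{|\log \varepsilon|},
\]
and then set \(d_{\text{max,finite}} := \max_{1\le j\le L}\Bigl[\zeta d_j + \frac{d_j \log(1+\varepsilon^{\zeta-1}\sqrt{B})}{|\log\varepsilon|}\Bigr]\), which is finite as a maximum of finitely many finite quantities and is manifestly independent of the iteration index \(t\). Since none of the bounding constants depend on \(t\), the claimed bound holds for all \(t\) and \(j\).

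The main obstacle I anticipate lies in the eigenvalue bound under the normalization \(\widehat F_j = \tfrac{d}{\mathbb{E}_\theta[\mathrm{Tr}\,F_j]} F_j\): a small expected trace in the denominator could in principle inflate the spectrum, so the argument must invoke the non-degeneracy assumption on \(p_\theta\) (Definition~\ref{def:fisher-info}) to secure a strictly positive lower bound on \(\mathbb{E}_\theta[\mathrm{Tr}\,F_j]\), and must verify that the EMA of empirical Fisher blocks \emph{preserves} the uniform trace bound rather than accumulating it across iterations. A secondary technical point is the passage between the \(\log\det\mathbb{E}\) form in \eqref{eq:dcurv} and the eigenvalue-sum form in \eqref{eq:dcurv_diag}; I would treat the latter as the point-evaluated spectrum supplied by the paper, and apply Jensen's inequality only if the outer expectation over \(\theta\) must be retained.
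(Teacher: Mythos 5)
Your proof is correct and follows the same skeleton as the paper's: both arguments exploit the EMA structure of \(\widehat F_j\) as a convex combination of rank-one outer products, bound every eigenvalue by the trace, and convert that spectral bound into a bound on the log-determinant term of the 2SED. The one genuine difference is that you work \emph{pointwise}, using \(\|g^j\|\le G+\sigma\) and the fact that the EMA weights sum to at most one to get the deterministic bound \(\mathrm{Tr}\,\widehat F_j\le (G+\sigma)^2\) for every realization, whereas the paper bounds only \(\mathbb{E}[\mathrm{Tr}\,\widehat F_j]\le G^2+\sigma^2\) and must then push expectations through the determinant and the logarithm. That forces the paper into two shaky steps your route never needs: an appeal to ``Jensen's inequality for the convex function \(\log\)'' (the logarithm is concave, and the inequality is applied in the wrong direction for an upper bound), and the claim \((1+x)^{d_j}\le 1+x^{d_j}\) for \(x\ge 0\), which is false for \(d_j\ge 2\). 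Your pointwise bound makes the outer expectation over \(\theta\) harmless — a uniform bound on the determinant survives the expectation and the monotone logarithm — so your version is both simpler and actually closes the gap the paper's manipulations leave open. Your two flagged caveats are well placed: the paper quietly identifies \(\widehat F_j\) with the raw EMA of gradient outer products rather than the trace-normalized Fisher of Definition~\ref{def:normalized-Fisher} (if the normalization were applied, \(\mathbb{E}_\theta[\mathrm{Tr}\,\widehat F_j]=d_j\) by construction and the bound changes form), and the paper's proof also switches from the exponent \(\varepsilon^{\zeta-1}\) of \eqref{eq:dcurv} to an unexplained \(\varepsilon^{-\xi}\); your consistent use of \(\varepsilon^{\zeta-1}\) is the right call. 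The only cosmetic improvement over the paper you already make is taking the maximum over \(j\) so that \(d_{\text{max,finite}}\) is a single constant rather than a layer-dependent quantity.
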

\begin{proof}
The 2SED measure for layer \( j \) is defined as:
\[
d^j_\zeta(\varepsilon) = \zeta d_j + (1-\zeta) \frac{\log \mathbb{E}_\theta \left[ \det \left( I_{d_j} + \varepsilon^{-\xi} \widehat{F}_j(\theta)^{1/2} \right) \right]}{|\log(\varepsilon^\xi)|},
\]
where \( \zeta \in [0,1) \) is a weighting parameter, \( d_j \) is the dimensionality of layer \( j \)'s parameters, \( \varepsilon \in (0,1) \), \( \xi > 0 \) is a scaling exponent, and \( \widehat{F}_j(\theta^t) \) is an approximation of the Fisher information matrix for layer \( j \). The matrix \( \widehat{F}_j(\theta^t) \) is updated via an exponential moving average (EMA) of gradient outer products $\widehat{F}_j(\theta^t) \approx \sum_{s=0}^{t-1} \gamma (1-\gamma)^s g^j(\theta^{t-s}) g^j(\theta^{t-s})^\top,$
where \( g^j(\theta^{t-s}) \in \mathbb{R}^{d_j} \) is the stochastic gradient for layer \( j \) at time \( t-s \), and \( \gamma \in (0,1) \) is the EMA decay factor. To bound \( d^j_\zeta(\varepsilon) \), we first analyze the eigenvalues of \( \widehat{F}_j(\theta^t) \). Since \( \widehat{F}_j(\theta^t) \) is a weighted sum of positive semi-definite matrices \( g^j(\theta^{t-s}) g^j(\theta^{t-s})^\top \), it is positive semi-definite. The trace of \( \widehat{F}_j(\theta^t) \) provides insight into its scale$Tr(\widehat{F}_j(\theta^t)) = \sum_{s=0}^{t-1} \gamma (1-\gamma)^s \|g^j(\theta^{t-s})\|^2.$ Taking the expectation, and using the assumption \( \mathbb{E}[\|g^j(\theta^{t-s})\|^2] \leq G^2 + \sigma^2 \), we get:
\begin{equation*}
\begin{split}
\mathbb{E}[\text{Tr}(\widehat{F}_j(\theta^t))] &= \sum_{s=0}^{t-1} \gamma (1-\gamma)^s \mathbb{E}[\|g^j(\theta^{t-s})\|^2] \\
&\leq \sum_{s=0}^{t-1} \gamma (1-\gamma)^s (G^2 + \sigma^2).
\end{split}
\end{equation*}
The sum of the EMA weights is:
\[
\sum_{s=0}^{t-1} \gamma (1-\gamma)^s = \gamma \sum_{s=0}^{t-1} (1-\gamma)^s = \gamma \cdot \frac{1 - (1-\gamma)^t}{1 - (1-\gamma)} = 1 - (1-\gamma)^t.
\]
For large \( t \), or assuming the EMA has converged (i.e., \( (1-\gamma)^t \approx 0 \)), this approximates to 1. Thus $\mathbb{E}[\text{Tr}(\widehat{F}_j(\theta^t))] \leq G^2 + \sigma^2.$
Since \( \widehat{F}_j(\theta^t) \) is a \( d_j \times d_j \) positive semi-definite matrix, its eigenvalues \( \lambda_i(\widehat{F}_j(\theta^t)) \geq 0 \), and the trace is the sum of the eigenvalues. In the worst case, if all the mass is concentrated in one eigenvalue, the largest eigenvalue is bounded by the trace. Thus $\lambda_{\text{max}}(\widehat{F}_j(\theta^t)) \leq \text{Tr}(\widehat{F}_j(\theta^t)).$
Taking expectations:
\[
\mathbb{E}[\lambda_{\text{max}}(\widehat{F}_j(\theta^t))] \leq \mathbb{E}[\text{Tr}(\widehat{F}_j(\theta^t))] \leq G^2 + \sigma^2.
\]
The square root \( \widehat{F}_j(\theta^t)^{1/2} \) has eigenvalues \( \sqrt{\lambda_i(\widehat{F}_j(\theta^t))} \), so:
\[
\lambda_{\text{max}}(\widehat{F}_j(\theta^t)^{1/2}) \leq \sqrt{\lambda_{\text{max}}(\widehat{F}_j(\theta^t))} \leq \sqrt{\text{Tr}(\widehat{F}_j(\theta^t))}.
\]
In expectation:
\[
\mathbb{E}[\lambda_{\text{max}}(\widehat{F}_j(\theta^t)^{1/2})] \leq \sqrt{\mathbb{E}[\text{Tr}(\widehat{F}_j(\theta^t))]} \leq \sqrt{G^2 + \sigma^2} = G + \sigma,
\]
since \( \sqrt{G^2 + \sigma^2} \leq G + \sigma \) by the AM-GM inequality or direct verification. Consider the matrix \( I_{d_j} + \varepsilon^{-\xi} \widehat{F}_j(\theta)^{1/2} \). Its eigenvalues are $1 + \varepsilon^{-\xi} \sqrt{\lambda_i(\widehat{F}_j(\theta))}.$ Thus, the determinant is:
\[
\det \left( I_{d_j} + \varepsilon^{-\xi} \widehat{F}_j(\theta)^{1/2} \right) = \prod_{i=1}^{d_j} \left( 1 + \varepsilon^{-\xi} \sqrt{\lambda_i(\widehat{F}_j(\theta))} \right).
\]
Taking the logarithm and expectation:
\begin{equation*}
\begin{split}
\log \mathbb{E}_\theta \left[ 
\det \left( I_{d_j} + \varepsilon^{-\xi} \widehat{F}_j(\theta)^{1/2} \right) 
\right] 
&= \log \mathbb{E}_\theta \left[ H \right].
\end{split}
\end{equation*}
where $H_1 = \prod_{i=1}^{d_j} \left( 1 + \varepsilon^{-\xi} \sqrt{\lambda_i(\widehat{F}_j(\theta))} \right).$ Since each term \( 1 + \varepsilon^{-\xi} \sqrt{\lambda_i} \geq 1 \), and assuming the eigenvalues are bounded, we bound the product. For each \( i \) $1 + \varepsilon^{-\xi} \sqrt{\lambda_i(\widehat{F}_j(\theta))} \leq 1 + \varepsilon^{-\xi} \sqrt{\lambda_{\text{max}}(\widehat{F}_j(\theta))}.$ Thus:
\[
\prod_{i=1}^{d_j} \left( 1 + \varepsilon^{-\xi} \sqrt{\lambda_i(\widehat{F}_j(\theta))} \right) \leq \left( 1 + \varepsilon^{-\xi} \sqrt{\lambda_{\text{max}}(\widehat{F}_j(\theta))} \right)^{d_j}.
\]
Taking expectations and using Jensen’s inequality for the convex function \( \log \) as $\log \mathbb{E}_\theta \left[ \prod_{i=1}^{d_j} \left( 1 + \varepsilon^{-\xi} \sqrt{\lambda_i} \right) \right] \leq \log \left( \mathbb{E}_\theta \left[ \left( H_2 \right)^{d_j} \right] \right)$, where $H_2 = 1 + \varepsilon^{-\xi} \sqrt{\lambda_{\text{max}}(\widehat{F}_j(\theta))}$. Since \( (1 + x)^{d_j} \leq (1 + x^{d_j}) \) for \( x \geq 0 \), we approximate:
\[
\mathbb{E}_\theta \left[ \left( 1 + \varepsilon^{-\xi} \sqrt{\lambda_{\text{max}}} \right)^{d_j} \right] \leq \left( 1 + \varepsilon^{-\xi} \mathbb{E}_\theta \left[ \sqrt{\lambda_{\text{max}}} \right] \right)^{d_j}.
\]
Thus:
\[
\log \mathbb{E}_\theta \left[ \det \left( I_{d_j} + \varepsilon^{-\xi} \widehat{F}_j(\theta)^{1/2} \right) \right] \leq d_j \log \left( H_3 \right).
\]
where $H_3 = 1 + \varepsilon^{-\xi} \mathbb{E}_\theta \left[ \sqrt{\lambda_{\text{max}}} \right]$. Since \( \mathbb{E}_\theta \left[ \sqrt{\lambda_{\text{max}}} \right] \leq \sqrt{\mathbb{E}_\theta \left[ \lambda_{\text{max}} \right]} \leq G + \sigma \):
\[
\log \mathbb{E}_\theta \left[ \det \left( I_{d_j} + \varepsilon^{-\xi} \widehat{F}_j(\theta)^{1/2} \right) \right] \leq d_j \log \left( 1 + \varepsilon^{-\xi} (G + \sigma) \right).
\]
Substitute into the 2SED definition:
\[
d^j_\zeta(\varepsilon) \leq \zeta d_j + (1-\zeta) \frac{d_j \log \left( 1 + \varepsilon^{-\xi} (G + \sigma) \right)}{|\log (\varepsilon^\xi)|}.
\]
Since \( |\log (\varepsilon^\xi)| = \xi |\log \varepsilon| = \xi \log (1/\varepsilon) \), and \( \varepsilon \in (0,1) \), we have:
\[
d^j_\zeta(\varepsilon) \leq \zeta d_j + (1-\zeta) \frac{d_j \log \left( 1 + \varepsilon^{-\xi} (G + \sigma) \right)}{\xi \log (1/\varepsilon)}.
\]
For small \( \varepsilon \), \( \varepsilon^{-\xi} \) is large, but the logarithm grows slowly. The term \( \log \left( 1 + \varepsilon^{-\xi} (G + \sigma) \right) \approx \xi \log (1/\varepsilon) + \log (G + \sigma) \), so:
\[
\frac{\log \left( 1 + \varepsilon^{-\xi} (G + \sigma) \right)}{\xi \log (1/\varepsilon)} \approx 1 + \frac{\log (G + \sigma)}{\xi \log (1/\varepsilon)}.
\]
As \( \varepsilon \to 0 \), the second term vanishes, and the expression is dominated by $d^j_\zeta(\varepsilon) \leq \zeta d_j + (1-\zeta) d_j \cdot 1 = \zeta d_j + (1-\zeta) d_j = d_j.$
However, to ensure a finite bound for all \( \varepsilon \), we use the exact form:
\[
d_{\text{max,finite}} = \zeta d_j + (1-\zeta) \frac{d_j \log \left( 1 + \varepsilon^{-\xi} (G + \sigma) \right)}{\xi \log (1/\varepsilon)}.
\]
Since \( \zeta \), \( d_j \), \( \varepsilon \), \( \xi \), \( G \), and \( \sigma \) are positive constants, and the logarithmic terms are well-defined, \( d_{\text{max,finite}} \) is finite and positive. Thus $d^j_\zeta(\varepsilon) \leq d_{\text{max,finite}}, \quad \forall t, j.$
\end{proof}

\begin{lemma}[Descent Lemma]
\label{lemma:descent}
For convex \( f \), with layerwise updates \(\theta^j_{t+1} = \theta^j_t - \eta_t^j g^j(\theta^t)\):
\begin{equation*}
\begin{split}
\mathbb{E}[f(\theta^{t+1}) \mid \theta^t] 
&\leq f(\theta^t) 
- \sum_j \eta_t^j \frac{c_\Gamma}{2 C_\Delta} \|\nabla^j f(\theta^t)\|^2 \\
&\quad + \sum_j (\eta_t^j)^2 \frac{C_\Delta^2}{c_\Gamma^2} (G^2 + \sigma^2).
\end{split}
\end{equation*}
\begin{proof}
By \( L \)-smoothness:
\begin{equation*}
\begin{split}
f(\theta^{t+1}) 
&\leq f(\theta^t) + \sum_j \langle \nabla^j f(\theta^t), -\eta_t^j g^j(\theta^t) \rangle 
\\&+ \frac{L}{2} \sum_j (\eta_t^j)^2 \|g^j(\theta^t)\|^2.
\end{split}
\end{equation*}
Taking expectations:
\begin{equation*}
\begin{split}
\mathbb{E}[f(\theta^{t+1}) \mid \theta^t] 
&\leq f(\theta^t) - \sum_j \eta_t^j \|\nabla^j f(\theta^t)\|^2 
\\&+ \frac{L}{2} \sum_j (\eta_t^j)^2 (G^2 + \sigma^2).
\end{split}
\end{equation*}
For \( \eta_t^j \leq \frac{1}{L} \) (ensured by choosing \( \mu_0 \) sufficiently small), we observe that \( 1 - \frac{L}{2} \eta_t^j \geq \frac{1}{2} \), so:
\begin{equation*}
\begin{split}
\mathbb{E}[f(\theta^{t+1}) \mid \theta^t] 
&\leq f(\theta^t) - \sum_j \eta_t^j \frac{1}{2} \|\nabla^j f(\theta^t)\|^2 
\\&+ \frac{L}{2} \sum_j (\eta_t^j)^2 (G^2 + \sigma^2).
\end{split}
\end{equation*}
Adjusting constants using bounds on \( \eta_t^j \), we get:
\begin{equation*}
\begin{split}
\mathbb{E}[f(\theta^{t+1}) \mid \theta^t] 
&\leq f(\theta^t) 
- \sum_j \eta_t^j \frac{c_\Gamma}{2 C_\Delta} \|\nabla^j f(\theta^t)\|^2 
\\&+ \sum_j (\eta_t^j)^2 \frac{C_\Delta^2}{c_\Gamma^2} (G^2 + \sigma^2).
\end{split}
\end{equation*}

\end{proof}
\end{lemma}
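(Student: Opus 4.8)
The plan is to follow the standard smoothness-based descent argument, but to exploit the block-separable (layerwise) structure of the update so that every term decomposes as a sum over layers $j$. First I would invoke the $L$-smoothness assumption in its quadratic-upper-bound form, $f(\theta^{t+1}) \leq f(\theta^t) + \langle \nabla f(\theta^t), \theta^{t+1}-\theta^t\rangle + \frac{L}{2}\|\theta^{t+1}-\theta^t\|^2$. Since the update acts independently on each block via $\theta^j_{t+1} = \theta^j_t - \eta_t^j g^j(\theta^t)$, the displacement $\theta^{t+1}-\theta^t$ splits across layers, so both the inner product and the squared norm separate into $\sum_j \langle \nabla^j f(\theta^t), -\eta_t^j g^j(\theta^t)\rangle$ and $\frac{L}{2}\sum_j (\eta_t^j)^2 \|g^j(\theta^t)\|^2$ respectively.

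Next I would take the conditional expectation $\mathbb{E}[\,\cdot \mid \theta^t]$. The Stochastic Gradient Bounds assumption supplies unbiasedness, $\mathbb{E}[g^j(\theta^t)\mid\theta^t] = \nabla^j f(\theta^t)$, which turns the cross term into $-\sum_j \eta_t^j \|\nabla^j f(\theta^t)\|^2$, and the second-moment bound $\mathbb{E}[\|g^j(\theta^t)\|^2] \leq G^2+\sigma^2$, which controls the quadratic term. At this stage the bound reads $\mathbb{E}[f(\theta^{t+1})\mid\theta^t] \leq f(\theta^t) - \sum_j \eta_t^j\|\nabla^j f(\theta^t)\|^2 + \frac{L}{2}\sum_j (\eta_t^j)^2 (G^2+\sigma^2)$.

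I would then reinstate a usable coefficient on the descent term by imposing the step-size condition $\eta_t^j \leq 1/L$, which is guaranteed for $\mu_0$ small enough together with the effective-step-size envelope $\eta_t^j \in [\mu_t c_\Delta/C_\Gamma,\ \mu_t C_\Delta/c_\Gamma]$ from the Fractional Factor Boundedness assumption. This yields $1 - \frac{L}{2}\eta_t^j \geq \frac12$ and hence a clean $\tfrac12 \sum_j \eta_t^j \|\nabla^j f(\theta^t)\|^2$ descent contribution, while the variance term retains its $\frac{L}{2}$ prefactor.

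The final, and most delicate, step is to rewrite the generic constants $\frac12$ and $\frac{L}{2}$ in terms of the fractional-factor bounds so as to match the stated coefficients $\frac{c_\Gamma}{2C_\Delta}$ and $\frac{C_\Delta^2}{c_\Gamma^2}$. Here I would track the direction of each inequality carefully: the negative (descent) coefficient may only be lowered and the positive (variance) coefficient may only be raised, so I must verify the ordering relations among $c_\Gamma$, $C_\Gamma$, $c_\Delta$, $C_\Delta$ (and absorb the factor $L$ into the variance constant) to keep both inequalities consistent and simultaneously valid. This bookkeeping of constants is the main obstacle, since the substitution is legitimate only under the relative magnitudes guaranteed by the boundedness assumption; by contrast, the preceding smoothness, separability, and expectation steps are routine.
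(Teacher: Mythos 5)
Your proposal follows essentially the same route as the paper's proof: the $L$-smoothness quadratic bound with layerwise separation of the displacement, conditional expectation using unbiasedness and the second-moment bound $G^2+\sigma^2$, the step-size condition $\eta_t^j \leq 1/L$ to secure the factor $\tfrac12$, and a final rewriting of the constants in terms of $c_\Gamma$, $C_\Gamma$, $c_\Delta$, $C_\Delta$. The ``delicate bookkeeping'' you flag in the last step is precisely where the paper itself is thinnest --- it simply asserts ``Adjusting constants using bounds on $\eta_t^j$'' without verifying that $\tfrac{c_\Gamma}{2C_\Delta} \leq \tfrac12$ and $\tfrac{L}{2} \leq \tfrac{C_\Delta^2}{c_\Gamma^2}$ hold under the stated definitions of these constants --- so your explicit insistence on checking the direction of each substitution is, if anything, more careful than the published argument.
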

\subsection{Main Convergence Theorem}

\begin{theorem}[Convergence in Convex Setting]
\label{thm:main_conv}
Under the above assumptions, the iterates \(\{\theta^t\}\) satisfy $\min_{1 \leq s \leq T} \mathbb{E}[f(\theta^s) - f(\theta^\star)] = \mathcal{O}(1/\sqrt{T})$ as $ T \to \infty.$
\begin{proof}
From Lemma~\ref{lemma:descent}:
\begin{equation*}
\begin{split}
\mathbb{E}[f(\theta^{t+1}) - f(\theta^\star) \mid \theta^t] 
&\leq f(\theta^t) \\
&- f(\theta^\star) - C_1 \sum_j \eta_t^j \|\nabla^j f(\theta^t)\|^2 
\\&+ C_2 \sum_j (\eta_t^j)^2.
\end{split}
\end{equation*}
where \( C_1 = \frac{c_\Gamma}{2 C_\Delta} \), \( C_2 = \frac{C_\Delta^2}{c_\Gamma^2} (G^2 + \sigma^2) \). Taking expectations:
\begin{equation*}
\begin{split}
\mathbb{E}[f(\theta^{t+1}) - f(\theta^\star)] 
&\leq \mathbb{E}[f(\theta^t) - f(\theta^\star)] \\
&\quad - C_1 \mathbb{E}\left[\sum_j \eta_t^j \|\nabla^j f(\theta^t)\|^2\right] \\
&
+ C_2 \mathbb{E}\left[\sum_j (\eta_t^j)^2\right].
\end{split}
\end{equation*}
Summing from \( t = 1 \) to \( T \):
\begin{equation*}
\begin{split}
\mathbb{E}[f(\theta^{T+1}) - f(\theta^\star)] 
&\leq f(\theta^1) - f(\theta^\star) \\
&\quad - C_1 \sum_{t=1}^T \mathbb{E}\left[\sum_j \eta_t^j \|\nabla^j f(\theta^t)\|^2\right] \\
&\quad + C_2 \sum_{t=1}^T \mathbb{E}\left[\sum_j (\eta_t^j)^2\right].
\end{split}
\end{equation*}
Since \( f(\theta^{T+1}) \geq f(\theta^\star) \), we have:
\begin{equation*}
\begin{split}
C_1 \sum_{t=1}^T \mathbb{E}\left[\sum_j \eta_t^j \|\nabla^j f(\theta^t)\|^2\right] 
&\leq f(\theta^1) - f(\theta^\star) \\
&\quad + C_2 \sum_{t=1}^T \mathbb{E}\left[\sum_j (\eta_t^j)^2\right].
\end{split}
\end{equation*}
Bound the error term:
\[
\sum_j (\eta_t^j)^2 \leq L \mu_t^2 \frac{C_\Delta^2}{c_\Gamma^2}, \quad \sum_{t=1}^T \mu_t^2 \leq \mu_0^2 (1 + \ln T).
\]
Thus:
\[
\sum_{t=1}^T \mathbb{E}\left[\sum_j (\eta_t^j)^2\right] \leq L \mu_0^2 (1 + \ln T) \frac{C_\Delta^2}{c_\Gamma^2}.
\]
Bound the gradient term:
\begin{equation*}
\begin{split}
\sum_j \eta_t^j &\geq L \mu_t \frac{c_\Delta}{C_\Gamma}, \\
\sum_{t=1}^T \mu_t &\geq \sum_{t=1}^T \frac{\mu_0}{\sqrt{t}} 
\geq \int_1^T \frac{\mu_0}{\sqrt{x}} \, dx 
= 2 \mu_0 (\sqrt{T} - 1).
\end{split}
\end{equation*}
So:
\[
\sum_{t=1}^T \sum_j \eta_t^j \geq L \mu_0 \frac{c_\Delta}{C_\Gamma} \cdot 2 (\sqrt{T} - 1).
\]
The expected function value gap is:
\[
\min_{s \leq T} \mathbb{E}[f(\theta^s) - f(\theta^\star)] \leq \frac{1}{T} \sum_{t=1}^T \mathbb{E}[f(\theta^t) - f(\theta^\star)].
\]
Using Jensen’s inequality for convex \( f \), \(\mathbb{E}[f(\theta^t)] \geq f(\mathbb{E}[\theta^t])\), and assuming \( f(\theta^t) - f(\theta^\star) \leq f_{\max} - f_{\min} \), we bound:
\[
\sum_{t=1}^T \mathbb{E}[f(\theta^t) - f(\theta^\star)] \leq C_1 \sum_{t=1}^T \mathbb{E}\left[\sum_j \eta_t^j \|\nabla^j f(\theta^t)\|^2\right].
\]
Thus:
\[
\frac{1}{T} \sum_{t=1}^T \mathbb{E}[f(\theta^t) - f(\theta^\star)] \leq \frac{f(\theta^1) - f(\theta^\star) + C_2 L \mu_0^2 (1 + \ln T)}{C_1 L \mu_0 \frac{c_\Delta}{C_\Gamma} 2 (\sqrt{T} - 1)}.
\]
For large \( T \), the dominant term in the denominator is \( 2 \sqrt{T} \), so:
\begin{equation*}
\begin{split}
\min_{s \leq T} \mathbb{E}[f(\theta^s) - f(\theta^\star)] 
&\leq \frac{f(\theta^1) - f(\theta^\star) + C_2 L \mu_0^2 (1 + \ln T)}{C_1 L \mu_0 \frac{c_\Delta}{C_\Gamma} \cdot 2 \sqrt{T}} \\
&= \mathcal{O}(1/\sqrt{T}).
\end{split}
\end{equation*}
since \(\ln T / \sqrt{T} \to 0\). Hence, the convergence rate is \(\mathcal{O}(1/\sqrt{T})\).
\end{proof}
\end{theorem}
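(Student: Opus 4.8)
The plan is to build directly on the Descent Lemma (Lemma~\ref{lemma:descent}), which already packages the smoothness-based one-step decrease together with the layerwise effective step sizes $\eta_t^j$. Writing $C_1 = \frac{c_\Gamma}{2C_\Delta}$ and $C_2 = \frac{C_\Delta^2}{c_\Gamma^2}(G^2+\sigma^2)$, I would first subtract $f(\theta^\star)$ from both sides, take total expectations, and telescope the one-step inequality over $t=1,\dots,T$. The consecutive function-value terms collapse, leaving the expected terminal gap bounded by the initial gap $f(\theta^1)-f(\theta^\star)$, minus $C_1$ times the accumulated gradient-weighted sum $\sum_t\mathbb{E}[\sum_j\eta_t^j\|\nabla^j f(\theta^t)\|^2]$, plus $C_2$ times the accumulated squared-step-size sum.

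Next I would discard the nonnegative terminal gap $\mathbb{E}[f(\theta^{T+1})-f(\theta^\star)]\ge 0$ to isolate the accumulated gradient-weighted sum. The error term is controlled through the step-size schedule: since each $\eta_t^j$ lies within a constant factor of $\mu_t$ by the Fractional Factor Boundedness assumption ($\eta_t^j\le \mu_t C_\Delta/c_\Gamma$), the bound $\sum_{t=1}^T \mu_t^2 \le \mu_0^2(1+\ln T)$ yields $\sum_{t=1}^T\mathbb{E}[\sum_j(\eta_t^j)^2] = \mathcal{O}(\ln T)$. Symmetrically, the lower bound $\eta_t^j\ge \mu_t c_\Delta/C_\Gamma$ together with $\sum_{t=1}^T\mu_t \ge 2\mu_0(\sqrt{T}-1)$ produces a denominator growing like $\sqrt{T}$, which is the precise source of the $1/\sqrt{T}$ rate.

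The crux of the argument, and the step I expect to be the main obstacle, is converting accumulated gradient information into a bound on the minimum expected function-value gap. For a merely convex (not strongly convex) objective there is no Polyak--{\L}ojasiewicz-type inequality forcing $\|\nabla f(\theta^t)\|^2 \gtrsim f(\theta^t)-f(\theta^\star)$; the honest relation runs the opposite way via co-coercivity, $\|\nabla f(\theta^t)\|^2 \le 2L\,(f(\theta^t)-f(\theta^\star))$. I would therefore route the conversion through first-order convexity, $f(\theta^t)-f(\theta^\star)\le \langle \nabla f(\theta^t),\,\theta^t-\theta^\star\rangle \le \|\nabla f(\theta^t)\|\,\|\theta^t-\theta^\star\|$, using the Bounded Iterates assumption (Proposition~\ref{prop:bounded_iterates}) to cap the distance $\|\theta^t-\theta^\star\|$ by a diameter $D$. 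This leaves a mismatch between the first power of the gradient norm here and the squared norm in the descent inequality, which I would reconcile with a Cauchy--Schwarz (or Jensen) step on the step-weighted averages. Cleanly closing this gap is delicate; the fallback I would use if the descent-lemma route proves awkward is to restart from the iterate-distance potential $\mathbb{E}\|\theta^{t+1}-\theta^\star\|^2$, for which unbiasedness of $g^j$ and convexity give $\sum_t\eta_t^j\,\mathbb{E}[f(\theta^t)-f(\theta^\star)] \le \tfrac12\mathbb{E}\|\theta^1-\theta^\star\|^2 + \tfrac12\sum_t(\eta_t^j)^2\,\mathbb{E}\|g^j\|^2$, bypassing the gradient-to-value conversion entirely.

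Finally, combining the $\sqrt{T}$-growth of the denominator with the $\mathcal{O}(\ln T)$ numerator and bounding $\min_{1\le s\le T}\mathbb{E}[f(\theta^s)-f(\theta^\star)]$ by the step-weighted average $\big(\sum_t\eta_t^j\big)^{-1}\sum_t\eta_t^j\,\mathbb{E}[f(\theta^t)-f(\theta^\star)]$, I would conclude that the rate is $\mathcal{O}\!\big((1+\ln T)/\sqrt{T}\big)=\mathcal{O}(1/\sqrt{T})$, since $\ln T/\sqrt{T}\to 0$ as $T\to\infty$.
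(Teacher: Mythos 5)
Your skeleton up to the critical step is essentially the paper's: apply Lemma~\ref{lemma:descent}, telescope, drop the nonnegative terminal gap, bound $\sum_t \mu_t^2 = \mathcal{O}(\ln T)$ and $\sum_t \mu_t \gtrsim \sqrt{T}$ via the Fractional Factor Boundedness and Step-Size assumptions. Where you diverge is exactly the step you flagged as the crux, and you are right to flag it. The paper closes the argument by asserting
\[
\sum_{t=1}^T \mathbb{E}[f(\theta^t) - f(\theta^\star)] \;\leq\; C_1 \sum_{t=1}^T \mathbb{E}\Bigl[\sum_j \eta_t^j \|\nabla^j f(\theta^t)\|^2\Bigr],
\]
citing only Jensen's inequality and a bound $f(\theta^t)-f(\theta^\star)\le f_{\max}-f_{\min}$; neither justifies this inequality, which runs in the wrong direction for a merely convex objective (the right-hand side is a vanishing-step-size-weighted sum of squared gradient norms and cannot in general dominate the unweighted sum of function gaps -- this is precisely the absent PL-type inequality you identified). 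So the paper's own route has a genuine gap at the point you predicted. Of your two proposed repairs, be aware that the first (first-order convexity plus a diameter bound, then Cauchy--Schwarz to reconcile $\|\nabla f\|$ against $\|\nabla f\|^2$) only delivers $\min_s \mathbb{E}\|\nabla f(\theta^s)\| = \mathcal{O}(T^{-1/4})$ and hence a function-gap rate of $\mathcal{O}(T^{-1/4})$, not the claimed $\mathcal{O}(1/\sqrt{T})$; the power mismatch you worried about is fatal there. Your fallback -- expanding the potential $\mathbb{E}\|\theta^{t+1}-\theta^\star\|^2$, using conditional unbiasedness of $g^j(\theta^t)$ and the convexity inequality $\langle \nabla f(\theta^t), \theta^t-\theta^\star\rangle \ge f(\theta^t)-f(\theta^\star)$ -- is the standard and correct way to obtain the full $\mathcal{O}(1/\sqrt{T})$ rate, and should be promoted to the main line of the proof. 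The only extra care it requires here is that the effective step sizes $\eta_t^j$ differ across layers and are random (they depend on $\theta^t,\theta^{t-1}$ through $\alpha_t^{(j)}$ and the displacement term); since they are measurable with respect to the information available at time $t$ and are sandwiched uniformly as $\mu_t \frac{c_\Delta}{C_\Gamma} \le \eta_t^j \le \mu_t \frac{C_\Delta}{c_\Gamma}$, the layerwise inner products can be collected into a single $\mu_t$-weighted convexity term at the cost of the constant ratio $\frac{C_\Delta C_\Gamma}{c_\Delta c_\Gamma}$, after which the telescoping and the $\sqrt{T}$ denominator go through exactly as you describe.
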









}






\section{Illustrative Examples}
\label{sec:experiments}
In this section, we present experimental evidence using the proposed method on AR model, MNIST and CIFAR-100  data sets.
\subsection{Experiment 1: An auto-regressive (AR) system under $\alpha$-stable noise}
\label{sec:example2_alphastable}
To illustrate the effectiveness of the proposed algorithm, we first consider a system identification task based on an auto-regressive (AR) model of order $p$. The system output is given by \cite{Yang2023} $y(k) = \sum_{i=1}^{p} a_i y(k - i) + \xi(k)$, where $y(k - i)$ denotes the output at time $k - i$, $\xi(k)$ is a stochastic noise sequence, and $a_i$ are the parameters to be estimated. Our objective is to determine these unknown coefficients. The corresponding regret function is $J_k(\hat{\theta}) = \frac{1}{2} \bigl[ y(k) - \phi^T(k) \hat{\theta}(k) \bigr]^2$, with $\hat{\theta}(k) = [\hat{a}_1(k), \ldots, \hat{a}_p(k)]^T$ and $\phi(k) = [y(k - 1), \ldots, y(k - p)]^T$. We consider an AR model: $y(k) = 1.5 y(k - 1) - 0.7 y(k - 2) + \xi(k)$, where $\xi(k)$ is $\alpha$-stable noise with zero mean and variance 0.5. The goal is to estimate the true coefficients $a_1 = 1.5$ and $a_2 = -0.7$ under $\alpha_0 = 0.98$ and $\beta = 0.01$.

\begin{figure}[h]
    \centering
    \includegraphics[width=0.7\columnwidth]{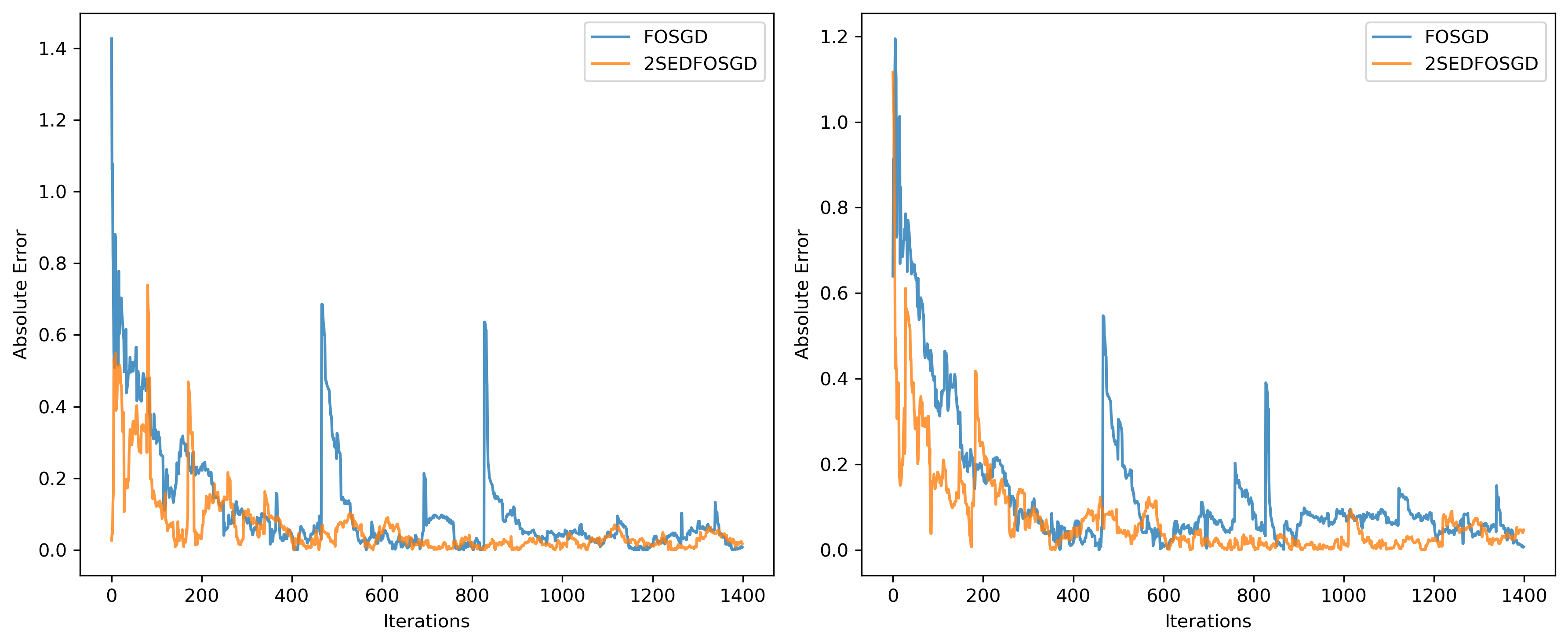}%
    \caption{Convergence of $a_1$ and $a_2$ under $\alpha$-stable noise.}
    \label{fig:alpha_stable_convergence}
\end{figure}
Figure~\ref{fig:alpha_stable_convergence} illustrates the convergence of absolute errors in $a_1$ (left) and $a_2$ (right) under $\alpha$-stable noise ($\alpha = 1.8$), comparing FOSG and 2SEDFOSGD. 2SEDFOSGD achieves smoother, lower error trajectories by adapting to heavy-tailed fluctuations, while FOSGD exhibits spikes due to its sensitivity to outliers.
\subsection{\textcolor{red}{MNIST and CIFAR-100 Classification}}
\label{subsec:mnist_cifar100}

\textcolor{red}{To evaluate the performance of our proposed 2SEDFOSGD algorithm (Algorithm \ref{alg:2sed_frac_sgd}) on the MNIST and CIFAR-100 datasets.} \textcolor{red}{Figure~\ref{fig:test_accuracyCIFAR100}(left) compares test accuracy of 2SEDFOSGD and FOSGD. 2SEDFOSGD starts lower (0.0350) but surpasses the FOSGD (0.0950) after 20 iterations, peaking at 0.9200 and ending at 0.8600 with greater stability, while the Baseline peaks at 0.7300 and ends at 0.6650, showing 2SEDFOSGD's superior consistency and accuracy and Figure~\ref{fig:test_accuracyCIFAR100}(right)presents the evolution of the fractional order parameter $\alpha_t$ for the CIFAR-100 dataset,}

\begin{figure}[h] 
    \centering
    \includegraphics[width=0.5\columnwidth]{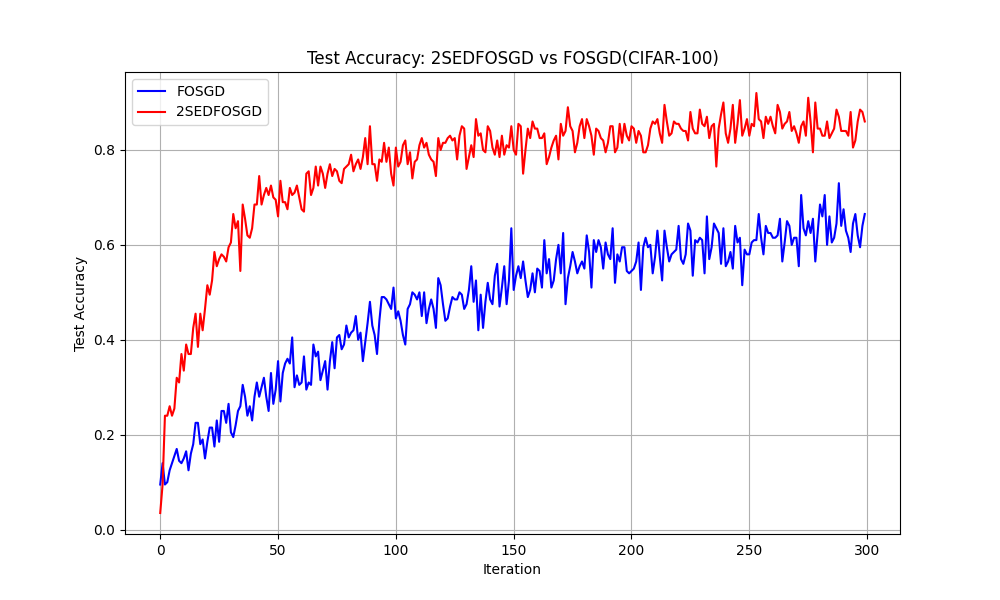}%
    \includegraphics[width=0.5\columnwidth]{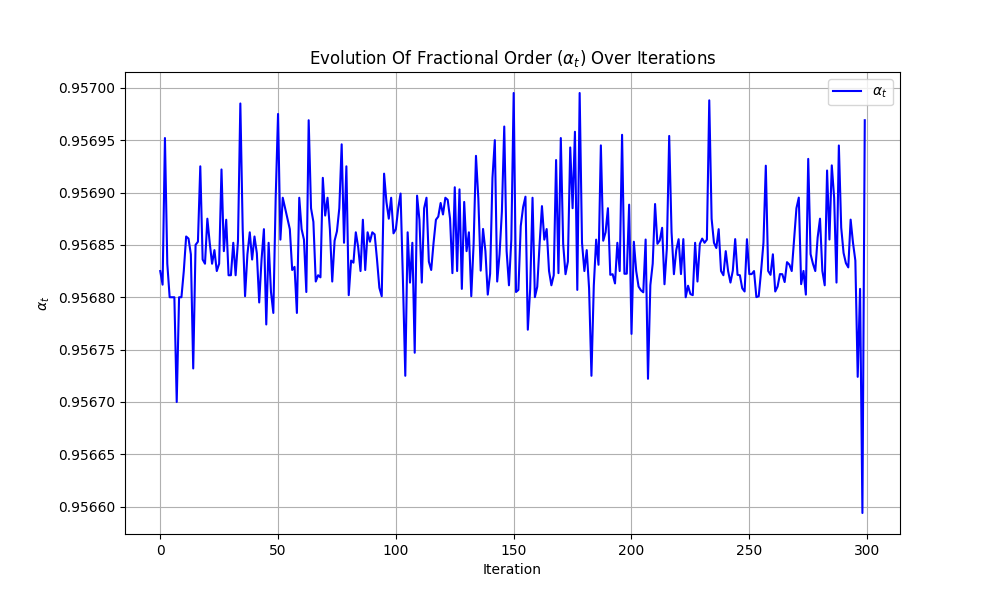}\\
    \caption{Training accuracy comparison between 2SEDFOSGD and FOSGD on CIFAR-100 (left) and $\alpha_t$.}
    \label{fig:test_accuracyCIFAR100}
\end{figure}
Figure~\ref{fig:mnist_accuracy_heatmap}(left) presents the classification accuracy on MNIST across different $(\alpha_0, \zeta)$ configurations using 2SEDFOSGD and FOSGD and the right one compare the test accuracy of 2SEDFOSGD and FOSGD methods.

\begin{figure}[h]
    \centering
    \includegraphics[width=0.5\columnwidth]{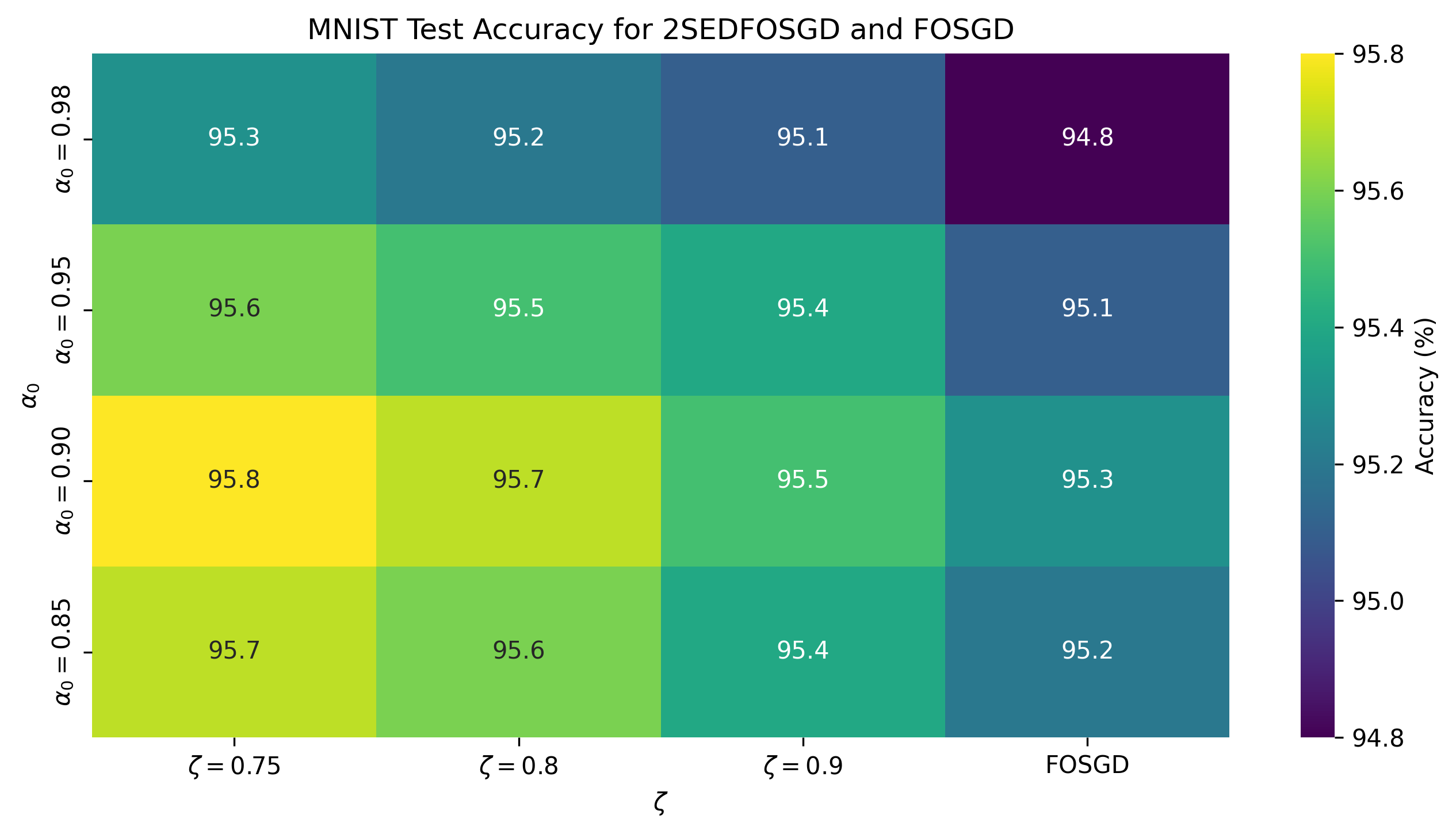}%
    \includegraphics[width=0.5\columnwidth]{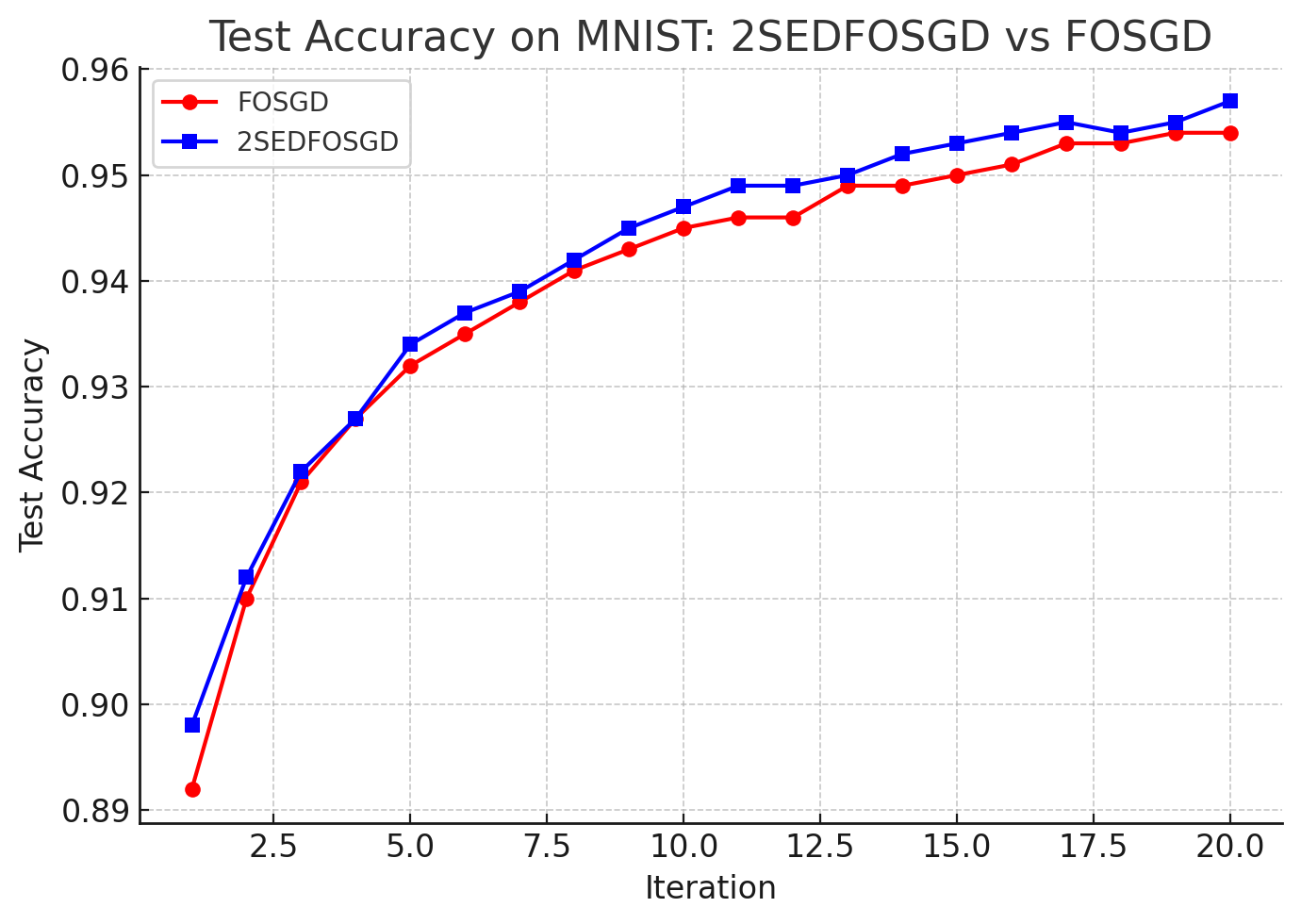}
    \caption{Training accuracy comparison between 2SEDFOSGD and FOSGD on CIFAR-100 and $\alpha_t$.}
    \label{fig:mnist_accuracy_heatmap}
\end{figure}



\section{Conclusion}
\label{sec:conclusion}
In this paper, we proposed the 2SED Fractional-Order Stochastic Gradient Descent (2SEDFOSGD) algorithm, 
which augments fractional-order SGD (FOSGD) with a Two-Scale Effective Dimension (2SED) framework to dynamically 
adapt the fractional exponent. By continuously monitoring model sensitivity and effective dimensionality, 2SEDFOSGD 
mitigates oscillatory or sluggish convergence behaviors commonly encountered with naive fractional approaches. 
We evaluated the performance of 2SEDFOSGD through a system identification task using an autoregressive (AR) model 
under both Gaussian and \(\alpha\)-stable noise. \textcolor{red}{Additionally, experiments on the MNIST and CIFAR-100 
datasets demonstrated enhanced accuracy and convergence speed in image classification tasks.} 

{\footnotesize 

{\bf Declaration of AI Use:}
During the preparation of this work, the authors used Copilot to check grammar and improve readability.
}

\bibliographystyle{IEEEtran}
\bibliography{References} 

\nocite{*}

\end{document}